\documentclass{article}

\usepackage{times}
\usepackage{graphicx}
\usepackage{subfigure}
\usepackage{natbib}
\usepackage{algorithm}
\usepackage[noend]{algorithmic}
\usepackage{hyperref}

\usepackage{booktabs}

\usepackage[accepted]{icml2018_arxiv}

\icmltitlerunning{Approximation Algorithms for Cascading Prediction Models}

\usepackage{amsmath,amssymb}
\usepackage{amsthm}

\newtheorem{definition}{Definition}
\newtheorem{lemma}{Lemma}

\newtheorem{theorem}{Theorem}

\newcommand{\overbar}[1]{\mkern 1.5mu\overline{\mkern-1.5mu#1\mkern-1.5mu}\mkern 1.5mu}

\newcommand{\card}[1]{\ensuremath{\left|#1\right|}}

\newcommand{\extreals}[0]{\ensuremath{\overbar{\mathbb{R}}}}
\newcommand{\indicator}[1]{\ensuremath{\mathbb{I}\left[#1\right]}}
\newcommand{\reals}[0]{\mathbb{R}}
\newcommand{\seq}[1]{\ensuremath{\left\{#1\right\}}}
\newcommand{\set}[1]{\ensuremath{\left\{#1\right\}}}
\newcommand{\surl}[1]{\begin{small}\url{#1}\end{small}}

\newcommand{\ac}[0]{a}
\newcommand{\accmodel}[0]{\hat q}
\newcommand{\ametric}[0]{q}
\newcommand{\amg}[0]{g}
\newcommand{\cost}[0]{c}

\newcommand{\examples}[0]{\mathcal{X}}

\newcommand{\labels}[0]{\mathcal{Y}}
\newcommand{\pairs}[0]{\ensuremath{\examples \times \labels}}
\newcommand{\predictionmodel}[0]{p}

\begin{document}

\twocolumn[
\icmltitle{Approximation Algorithms for Cascading Prediction Models}

\begin{icmlauthorlist}
	\icmlauthor{Matthew Streeter}{google}
\end{icmlauthorlist}

\icmlaffiliation{google}{Google, LLC}
\icmlcorrespondingauthor{Matthew Streeter}{mstreeter@google.com}
\icmlkeywords{cascades, machine learning}

\vskip 0.3in
]

\printAffiliationsAndNotice{}

\begin{abstract}
We present an approximation algorithm that takes a
pool of pre-trained models as input
and produces from it a cascaded model with similar accuracy
but lower average-case cost.  Applied
to state-of-the-art ImageNet classification models, this yields
up to a 2x reduction in floating point multiplications,
and up to a 6x reduction in average-case memory I/O.
The auto-generated
cascades exhibit intuitive properties, such as using lower-resolution
input for easier images and requiring higher prediction
confidence when using a computationally cheaper model.
\end{abstract}

\section{Introduction}

In any machine learning task, some examples are harder than
others, and intuitively we should be able to get away with less computation on
easier examples.  Doing so has the potential to reduce serving costs in the
cloud as well as energy usage on-device, which is
important in a wide variety of applications \cite{guan2017energy}.

Following the tremendous empirical success of deep learning,
much recent work has focused on making deep neural networks adaptive,
typically via an end-to-end training approach in which the network
learns to make example-dependent decisions as to which computations are
performed during inference.  At the same time, recent work on neural
architecture search has demonstrated that optimizing over thousands of
candidate model
architectures can yield results that improve upon state-of-the-art
architectures designed by humans \cite{zoph2017learning}.
It is natural to think that combining these ideas should lead to even
better results, but how best to do so remains an open problem.

One of the motivations for our work is that for many problems, there
are order-of-magnitude differences between the cost of a reasonably
accurate model and that of a model with state-of-the-art accuracy.
For example, the most accurate
NasNet model achieves 82.7\% accuracy on ImageNet using 23 billion multiplies
per example, while a MobileNet model achieves 70.6\% accuracy with only
569 million multiplies per example \cite{howard2017mobilenets, zoph2017learning}.  If we could identify the images on
which the smaller model's prediction is (with high probability) no less
accurate than the larger one's, we could use fewer multiplications on
those images without giving up much accuracy.

In this work, we present a family of algorithms that can be used to
create a cascaded model
with the same accuracy as a specified reference
model, but potentially lower average-case cost, where
cost is user-defined.
This family is defined by a meta-algorithm with various pluggable
components.
In its most basic instantiation, the algorithm takes a pool of
pre-trained models as input and produces a cascaded model in two steps:
\begin {enumerate}
	\item It equips each model with a set of possible rules for returning ``don't know"
		(denoted $\bot$) on examples where it is not confident.  Each (model, rule) combination is called an \emph{abstaining model}.
	\item It selects a sequence of abstaining models to try, in order,
		when making a prediction (stopping once we find a model
		that does not return $\bot$).
\end {enumerate}

We also present instantiations of the meta-algorithm that generate new
prediction models on-the-fly, either using lightweight training of ensemble
models or a full architecture search.  We also discuss a
variant that produces an adaptive policy tree rather than a fixed
sequence of models.

An important feature of our algorithms is that they scale efficiently
to a large number of models and (model, rule) combinations.
They also allow for computations performed by one stage of the cascade
to be re-used in later stages when possible (e.g., if two successive
stages of the cascade are neural networks that share the same first $k$
layers).

\section {Abstaining models} \label {sec:abstaining_models}

Our cascade-generation algorithm requires as input a set of \emph{abstaining models}, which are prediction models that return ``don't know" (denoted $\bot$)
on certain examples.
For classification problems, such models are known as \emph{classifiers with a reject option}, and methods for training them have been widely studied \cite{yuan2010classification}.
In this section we present a simple post-processing approach that can be used
to convert a pre-trained model from any domain into an abstaining model.

We assume our prediction model is a function $\predictionmodel: \examples \rightarrow \labels$, and
that its performance is judged by taking the expected value of an accuracy
metric $\ametric: \labels \times \labels \rightarrow \reals$, where
$\ametric(\hat y, y)$ is the accuracy of prediction $\hat y$ when the true
label is $y$.
Our goal is to create an abstaining model
$m: \examples \rightarrow \labels \cup \set{\bot}$
that returns $\bot$ on
examples where $\predictionmodel$ has low accuracy, and returns
$\predictionmodel(x)$ otherwise.

Toward this end, we create a model $\accmodel$ to predict $\ametric(\predictionmodel(x), y)$ given $x$.
Typically, this model is based on the values
of intermediate computations performed when
evaluating $\predictionmodel(x)$.  We train the model to estimate the value of the accuracy
metric, seeking to achieve
$\accmodel(x) \approx \ametric(p(x), y)$.
We then return $\bot$ if the predicted accuracy falls below some threshold.

As an example, for a multi-class classification problem, we might
use the entropy of the vector of predicted class probabilities as a feature,
and $\accmodel$ might be a one-dimensional isotonic regression that
predicts top-1 accuracy as a function of entropy.
The rule would then return $\bot$ on examples where entropy is too high.

Pseudo-code for the abstaining model is given in
Algorithm~\ref{alg:confident_abstaining_model}.  Here and elsewhere, we distinguish between an algorithm's
parameters and its input variables.  Specifying values for the parameters
defines a function of the input variables, for example ConfidentModel($\cdot; \predictionmodel, \accmodel, t)$ denotes the abstaining model based
on prediction model $\predictionmodel$, accuracy model $\accmodel$, and
threshold $t$.

The accuracy model $\accmodel$ used in this approach is similar to the binary
event forecaster used in the calibration scheme of Kuleshov and Liang \yrcite{kuleshov2015calibrated}, and is interchangeable with it in
the case where $\ametric(\hat y, y) \in \{0,1\}$.

\begin{algorithm}[h]
  \caption{ConfidentModel($x; \predictionmodel, \accmodel, t$)}
  \label{alg:confident_abstaining_model}
\begin{algorithmic}
   \STATE {\bfseries Parameters:}
	prediction model $\predictionmodel: \examples \rightarrow \labels$,
	accuracy model $\accmodel: \examples \rightarrow \reals$,
	threshold $t \in \reals$
   \STATE {\bfseries Input:} example $x \in \examples$
	\STATE {\bfseries return} $\predictionmodel(x)$ if $\hat q(x) \ge t$, else $\bot$
\end{algorithmic}
\end{algorithm}

\section {Cascade generation algorithm}

\newcommand{\answered}[0]{\ensuremath{\mathcal{A}}}
\newcommand{\cascade}[0]{\mathrm{cascade}}
\newcommand{\ctotal}[0]{\ensuremath{c_{\Sigma}}}
\newcommand{\pref}[0]{p_{\textrm{ref}}}
\newcommand{\numanswered}[1]{\ensuremath{\card{\answered\left(#1\right)}}}

Having created a set of abstaining models, we must next select a
sequence of abstaining models to use in our cascade.
Our goal is to generate a cascade that minimizes average cost as measured
on a validation set, subject to an accuracy constraint (e.g., requiring
that overall accuracy match that of some existing reference model).

We accomplish this using a greedy algorithm presented in \S\ref{sec:greedy}.
To make clear the flexibility of our approach, we present it as
a meta-algorithm parameterized by several functions:
\begin{enumerate}

\item An \emph{accuracy constraint} $\ac(p, R)$ determines whether a prediction model $p$ is sufficiently accurate on a set $R \subseteq \pairs$ of labeled examples.

\item A cost function $\cost(m, S)$ determines the cost of evaluating $m(x)$, possibly making use of intermediate results computed when evaluating each model in $S$.

\item An \emph{abstaining model generator} $\amg(R, S)$ returns a set of
	abstaining models, given the set $S$ of models that have already been added to the cascade by the greedy algorithm as well as the set $R$ of labeled examples remaining (those on which every model in $S$ abstains).

\end{enumerate}
Possible choices for these functions are discussed in sections \ref{sec:accuracy_constraints}, \ref{sec:cost_functions}, and \ref{sec:model_generators}, respectively.  \S\ref{sec:theoretical} presents theoretical results about the performance of the greedy algorithm.  \S\ref{sec:adaptive_policies} discusses how to
modify the greedy algorithm to return an adaptive policy tree rather than a
linear cascade, and
\S\ref{sec:architecture_search} discusses integrating the algorithm
with model architecture search.

\subsection {The greedy algorithm} \label {sec:greedy}

We now present the greedy cascade-generation algorithm.  As already
mentioned, the goal
of the algorithm is to produce a cascade that minimizes cost, subject to
an accuracy constraint.
The high-level idea of the algorithm is to find the abstaining model that
maximizes the number of non-$\bot$ predictions per unit cost, considering only
those abstaining models
that satisfy the accuracy constraint on the subset of examples for
which they return a prediction.
We then remove from the validation set the examples on which this model returns a prediction, and apply the same greedy rule to choose the next
abstaining model, continuing in this manner until no examples remain.

\newcommand{\mgood}[0]{M^{\textrm{accurate}}}
\newcommand{\museful}[0]{M^{\textrm{useful}}}

\begin{algorithm}[tbh]
  \caption{GreedyCascade($R; \ac, \cost, \amg$)}
  \label{alg:greedy}
\begin{algorithmic}
   \STATE {\bfseries Parameters:}
	accuracy constraint $\ac$,
	cost function $\cost$,
	abstaining model generator $\amg$
   \STATE {\bfseries Input:} validation set $R$ %
   \STATE Initialize
	$i = 1$,
	$R_1 = R$,
   \WHILE{$|R_i| > 0$}
	\STATE $M_i = \amg(R_i, m_{1:i-1})$
	\STATE $\museful_i = \set {m \in M_i: \answered(m, R_i) \neq \emptyset}$
	\STATE $\mgood_i = \set{m \in \museful_i: \ac(m, \answered(m, R_i))}$
	\STATE If $\mgood_i = \emptyset$, return $\bot$.
	\STATE Define $r_i(m) \equiv \frac {|\answered(m, R_i)|} {c(m, m_{1:i-1})}$
	\STATE $m_i = \textrm{argmax}_{m \in \mgood_i} \set { r_i(m) }$
	\STATE $R_{i+1} = R_i \setminus \answered(m_i, R_i)$
	\STATE $i = i + 1$
   \ENDWHILE
   \STATE {\bfseries return} $m_{1:i-1}$
\end{algorithmic}
\end{algorithm}

To define the algorithm precisely, we denote the set of examples on which
an abstaining model $m$ returns a prediction by
\[
	\answered(m, R) = \set{ (x, y) \in R: m(x) \neq \bot } \mbox{ .}
\]
Here and elsewhere, we use the shorthand $m_{j:k}$ to denote the sequence $\seq{m_i}_{i=j}^k$.
Algorithm~\ref{alg:greedy} gives pseudo-code for the algorithm using this notation.

Our greedy algorithm builds on earlier approximation algorithms
for min-sum set cover and related problems \cite{feige2004approximating,munagala2005pipelined,streeter2007combining}.
The primary difference is that our algorithm must worry about maintaining
accuracy in addition to minimizing cost.
We also consider a more general notion of cost,
reflecting the possibility of reusing intermediate computations.

\subsection{Accuracy constraints} \label {sec:accuracy_constraints}

We first consider the circumstances under which Algorithm~\ref{alg:greedy}
returns a cascade that satisfies the accuracy constraint.

Let $S = m_{1:k}$ be a cascade returned by the greedy algorithm, and let
$A_i = \answered(m_i, R_i)$, where $R_i$ is the set of examples remaining
at the start of iteration $i$.  Observe that $R_{i+1} = R_i \setminus A_i$,
which implies that $A_i$ and $A_j$ are disjoint for $j \neq i$.  Also,
because $R_1 = R$ and $R_k = \emptyset$, $\bigcup_i A_i = R$.

By construction, $\ac(m_i, A_i)$ holds for all $i$.  Because $S$ uses $m_i$ to
make predictions on examples in $A_i$, this implies $\ac(S, A_i)$ holds as well.
Thus, the accuracy constraint $\ac(S, R)$ will be satisfied so long as
$\ac(S, A_i) \ \forall i$ implies $\ac(S, \bigcup_i A_i)$.
A sufficient condition is that the accuracy constraint is \emph{decomposable},
as captured in the following definition.

\begin {definition}
An accuracy constraint $\ac$ is \emph{decomposable} if, for any two
disjoint sets $A, B \subseteq \examples \times \labels$,
$
	\ac(m, A) \wedge \ac(m, B) \implies \ac(m, A \cup B)
$.
\end {definition}

An example of a decomposable accuracy constraint is the MinRelativeAccuracy
constraint shown in Algorithm~\ref{alg:min_relative_accuracy}, which
requires that average accuracy according to some metric is at least
$\alpha$ times that of a fixed reference model.  %
When using this constraint, any cascade returned by
the greedy algorithm is guaranteed to have overall accuracy at least $\alpha$
times that of the reference model $\pref$.

We now consider the circumstances under which the greedy algorithm terminates
successfully (i.e., does not return $\bot$).  This happens so long as
$\mgood_i$ is always non-empty.  A sufficient condition is that
the accuracy constraint is \emph{satisfiable}, defined as follows.

\begin {definition}
An accuracy constraint $\ac$ is \emph{satisfiable} with respect to an
abstaining model generator $g$ and validation set $R$
if there exists a model $m^*$ that
(a) never abstains,
(b) is always returned by $g$, and
(c) satisfies $\ac(m^*, R_0) \ \forall R_0 \subseteq R$.
\end {definition}

The MinRelativeAccuracy constraint is satisfiable provided the reference model
$\pref$ is always among the models returned by the model generator $g$.

\newcommand{\qmin}[0]{\ensuremath{q_{\textrm{min}}}}
Note that MinRelativeAccuracy is \emph{not} the same as simply requiring a fixed
minimum average accuracy (e.g., 80\% top-1 accuracy).  Rather, the accuracy
required depends on the reference model's performance on the provided subset
$R_0$, which takes on many different values when running Algorithm~\ref{alg:greedy}.
A constraint that requires 
accuracy $\ge \qmin$ on $R_0$ is generally
not satisfiable, because $R_0$ might contain only examples that all models
misclassify.

\begin{algorithm}[t]
	\caption{MinRelativeAccuracy($p, R_0; \alpha, q, \pref$)}
   \label{alg:min_relative_accuracy}
\begin{algorithmic}
   \STATE {\bfseries Parameters:}
	$\alpha \in (0, 1]$,
	accuracy metric $q$, %
	prediction model $\pref$ %
   \STATE {\bfseries Input:} prediction model $p$, %
	validation set $R_0$ %
	\STATE Define $Q(p') = \sum_{(x, y) \in R_0} q(p'(x), y)$
	\STATE {\bfseries return} $\indicator{Q(p) \ge \alpha \cdot Q(\pref)}$
\end{algorithmic}
\end{algorithm}

\subsection {Cost functions} \label{sec:cost_functions}

We now consider possible choices for the cost function $\cost$.  In the simplest case, there is no reuse of computations and $\cost(m, S)$ depends only on $m$,
in which case we say the cost function is \emph{linear}.

To allow for computation reuse, we define a weighted, directed graph
with a vertex $v(m)$ for each model $m$, plus a special vertex $v_\emptyset$.
For each $m$, there is an edge $(v_\emptyset, v(m))$ whose weight is the
cost of computing $m(x)$ from scratch.  An edge $(v(m_1), v(m_2))$ with
weight $w$ indicates that $m_2(x)$ can be computed at cost $w$ if $m_1(x)$
has already been computed.
The cost function is then:
\begin{equation} \label{eq:graph_cost}
	\cost(m, S) = \min_{v \in V(S)}
\set { \mathrm{shortest\_path}(v, v(m))	}
\end{equation}
where $V(S) = \set{v(m): m \in S} \cup \set{v_\emptyset}$.

As an example, suppose we have a set of models $\set{m_i: 1 \le i \le D}$,
where $m_i$ makes predictions by computing the output of the first $i$
layers of a fixed neural network (e.g., a ResNet-like image classifier).
In this case, the graph is a linear chain whose $i$th edge has weight
equal to the cost of computing the output of layer $i$ given its input.

Equation \eqref{eq:graph_cost} can also be generalized in terms of hypergraphs
to allow reuse of multiple intermediate results.
This is useful
in the case of ensemble models, which take a
weighted average of other models' predictions.

\subsection{Abstaining model generators} \label {sec:model_generators}

We now discuss choices for the abstaining model generator $g$ used in
Algorithm~\ref{alg:greedy}.  Given a set $R_i$ of examples remaining in
the validation set, and a sequence $m_{1:i-1}$ of models that are already
in the cascade, $g$ returns a set of models to consider for the $i$th
stage of the cascade.

A simple approach to defining $g$ is to take a fixed set $P$ of prediction
models,
and for each one to return a ConfidentModel with the threshold
set just high enough to satisfy the accuracy constraint, as illustrated
in Algorithm~\ref{alg:confident_model_set}.

\newcommand{\accmodels}[0]{\ensuremath{\hat Q}}
\newcommand{\tmin}[0]{t_{\textrm{min}}}
\newcommand{\mfunc}[0]{\ensuremath{m}}

\begin{algorithm}[h]
	\caption{ConfidentModelSet($R; P, \accmodels, \ac$)}
   \label{alg:confident_model_set}
\begin{algorithmic}
   \STATE {\bfseries Parameters:} set $P$ of prediction models,
	set $\accmodels$ of accuracy models,
	accuracy constraint $\ac$
   \STATE {\bfseries Input:} validation set $R$
   \STATE Define $\ac^\top(m) = \ac(m, \answered(m, R))$
   \STATE Define $\mfunc(p, \accmodel, t) = \textrm{ConfidentModel}(\cdot; p, \accmodel, t)$
	\STATE Define $\tmin(p, \accmodel) = \min\set{t \in \extreals: \ac^\top(\mfunc(p, \accmodel, t))}$
	\STATE {\bfseries return} \set { \mfunc(p, \accmodel, \tmin(p, \accmodel))  : p \in P, \accmodel \in \accmodels }
\end{algorithmic}
\end{algorithm}

Another lightweight approach is to fit an ensemble model that makes use
of the already-computed predictions of the first $i-1$ models.  Assume that
each abstaining model $m_j$ has a backing prediction model $p_j$ that never
returns $\bot$.  For each $p$ in a fixed set $P$ of prediction models,
we fit an ensemble model $\bar p(x) = \beta_0 p(x) + \sum_{j=1}^{i-1} \beta_j p_j(x)$, where $\beta$ is optimized to maximize accuracy on the
remaining examples $R_i$.  Each $\bar p$ can then be converted to a
ConfidentModel in the same manner as above.

The most thorough (but also most expensive) approach is
to perform a search
to find a model architecture that yields the best benefit/cost ratio,
as discussed further in \S\ref{sec:architecture_search}.

\subsection{Theoretical results} \label {sec:theoretical}

In this section we provide performance guarantees for Algorithm~\ref{alg:greedy}, showing that under reasonable assumptions it produces a cascade that satisfies the accuracy constraint and has cost within a factor of 4 of optimal.  We also show that even in very special cases, the problem of finding a cascade whose
cost is within a factor $4-\epsilon$ of optimal is NP-hard for any $\epsilon > 0$.

As shown in \S\ref{sec:accuracy_constraints}, the greedy algorithm will return
a cascade that satisfies the accuracy constraint provided the
constraint is \emph{decomposable} and \emph{satisfiable}.  This is a fairly
weak assumption, and is satisfied by the MinRelativeAccuracy constraint given
in Algorithm~\ref{alg:min_relative_accuracy}.

We now consider the conditions the cost function must satisfy,
which are more subtle.
Our guarantees hold for all linear cost functions, as well
as a certain class of functions that allow for a limited form of computation
reuse.  To make this precise, we will use the following definitions.

\begin{definition}
A set $M^*$ of abstaining models \emph{dominates} a sequence $\seq{m_i}_{i=1}^k$ of
abstaining models with respect to a cost function $\cost$ if two conditions hold:
\begin {enumerate}
  \item $\sum_{m \in M^*} \cost(m, \emptyset) \le \sum_{i=1}^k \cost(m_i, m_{1:i-1})$, and
  \item for any $x \in \examples$, if $m_i(x) \neq \bot$ for some $i$, then
	  $m(x) \neq \bot$ for some $m \in M^*$.
\end {enumerate}
\end{definition}

If the cost function is linear, $\cost(m, S) = \cost(m, \emptyset) \ \forall m, S$, and any sequence of abstaining models is dominated by the corresponding set.

\begin{definition}
A cost function $\cost$ is \emph{admissible} with respect to a set of abstaining models $M$ if, for any sequence %
of models in $M$, there exists a set $M^* \subseteq M$ that dominates it.
\end{definition}

\newcommand{\mdom}[0]{m_{\textrm{dom}}}

A linear cost function is always admissible.
Cost functions of the form \eqref{eq:graph_cost} are admissible under certain
conditions.  A sufficient condition is that the graph defining the cost
function is a linear chain, and for each edge $(v(m_i), v(m_{i+1}))$,
$m_{i}(x) \neq \bot \implies m_{i+1}(x) \neq \bot$.
If the graph is a linear
chain but does not have this property, we can make the cost function
admissible by including additional models.  Specifically,
for each $k$, we add a model $m^*_k$ that computes the output of models
$m_1, m_2, \ldots, m_k$ in order (at cost $\cost(m^*_k, \emptyset) = \sum_{i=1}^k \cost(m_i, m_{1:i-1}) = \cost(m_k, \emptyset)$), and then returns the prediction (if any) returned by the model with maximum index.
The singleton set $\set{m^*_k}$ will then dominate any sequence composed of
models in
$\set{m_1, m_2, \ldots, m_k}$.
Similar arguments apply to graphs comprised of multiple linear chains.
(Such graphs arise if we have multiple deep neural networks, each of
which can return a prediction after evaluating only its first $k$ layers.)

We also assume $\cost(m, S) \le \cost(m, \emptyset) \ \forall m, S$ (i.e., reusing intermediate computations does not hurt).

To state our performance guarantees, we now introduce some additional notation.  For any cascade $S = m_{1:k}$ and
cost function $\cost$, let
\begin{equation} \label{eq:csigma}
  \ctotal(S) \equiv \sum_{i=1}^k \cost(m_i, m_{1:i-1})
\end{equation}
be the cost of computing the output of all stages.  For any example $x$
and cascade $S=m_{1:k}$,
let $\tau(x, S)$ be the cost of computing the output
of $S$, that is
\begin{equation} \label {eq:tau}
	\tau(x, S) = \sum_{i: m_j(x) = \bot \forall j < i} \cost(m_i, m_{1:i-1}) \mbox { .}
\end{equation}
Finally, for any set $M$ of models, we define $\answered(M, R) = \cup_{m \in M} \answered(m, R)$.

The following lemma shows that,
if the cost function is admissible, the number of examples that a cascade
can answer per unit cost is bounded by the maximum number of examples any
single model can answer per unit cost.  Theorem~\ref{thm:greedy} then
uses this inequality to bound the approximation ratio of the greedy algorithm.

\newcommand{\rmax}[0]{r^*}

\begin{lemma} \label {lem:rate}
For any set $R \subset \examples \times \labels$, any set $M$ of abstaining models,
any cost function $c$ that is admissible with respect to $M$, and any sequence $S$ of models in $M$,
\[
	\numanswered{S, R} \le \rmax \ctotal(S)
\]
where
\[
	\rmax = \max_{m \in M} \set { \frac {\numanswered{m, R}} {c(m, \emptyset) } } %
\]
and $\ctotal$ is defined in Equation~\eqref{eq:csigma}.
\end{lemma}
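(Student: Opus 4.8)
The plan is to exploit admissibility to replace the sequence $S$ by a single \emph{set} $M^*$ of models for which the per-model rate bound is immediate, and then combine a union bound with the definition of $\rmax$. Since $\cost$ is admissible with respect to $M$ and $S$ is a sequence of models in $M$, there exists a set $M^* \subseteq M$ that dominates $S$. The two domination conditions are exactly what is needed: condition~(2) controls the number of answered examples, and condition~(1) controls the cost.

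First I would use domination condition~(2) to argue that every example answered by $S$ is answered by $M^*$, i.e.\ $\answered(S, R) \subseteq \answered(M^*, R)$; indeed, if $(x,y) \in \answered(S,R)$ then $m_i(x) \neq \bot$ for some $i$, so by~(2) some $m \in M^*$ has $m(x) \neq \bot$, placing $(x,y)$ in $\answered(M^*, R)$. Taking cardinalities and applying the union bound gives
\[
	\numanswered{S, R} \le \numanswered{M^*, R} \le \sum_{m \in M^*} \numanswered{m, R} \mbox{ .}
\]
Next I would bound each summand using the definition of $\rmax$: for every $m \in M^* \subseteq M$ we have $\numanswered{m, R} \le \rmax \cdot \cost(m, \emptyset)$. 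Substituting, factoring out $\rmax$, and finally invoking domination condition~(1), which states $\sum_{m \in M^*} \cost(m, \emptyset) \le \ctotal(S)$, yields
\[
	\numanswered{S, R} \le \rmax \sum_{m \in M^*} \cost(m, \emptyset) \le \rmax \, \ctotal(S)
\]
as claimed.

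I do not expect a genuine obstacle here: once admissibility supplies the dominating set, the argument is a short chain of inequalities, with the two domination conditions slotting in for the ``examples'' and ``cost'' halves of the bound respectively. The only point requiring care is to route the entire argument through $M^*$ rather than $S$ directly --- the rate bound $\rmax$ is stated per single model with cost measured from scratch, and has no reason to hold termwise along $S$ once computation reuse makes $\cost(m_i, m_{1:i-1})$ strictly smaller than $\cost(m_i, \emptyset)$. This is precisely why the detour through a dominating set (whose member costs are all $\cost(m, \emptyset)$) is essential, and why admissibility is the right hypothesis for the lemma.
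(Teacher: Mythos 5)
Your proposal is correct and follows essentially the same route as the paper's proof: invoke admissibility to obtain a dominating set $M^*$, bound $\numanswered{S,R}$ by $\numanswered{M^*,R}$ via domination condition~(2), apply the union bound and the definition of $\rmax$ termwise, and finish with domination condition~(1). Your writeup is in fact slightly more explicit than the paper's (which leaves the step $\numanswered{S,R} \le \numanswered{M^*,R}$ implicit in the word ``dominates''), and your closing remark about why the detour through $M^*$ is necessary matches the intended role of admissibility exactly.
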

\begin{proof}
	For any $m \in M$, $\numanswered{m, R} \le \rmax \cdot \cost(m, \emptyset)$ by definition of $\rmax$.  Thus, for any set $M^* \subseteq M$,
\begin{align*}
	\numanswered{M^*, R} & \le \sum_{m \in M^*} \numanswered{m, R} \\
	& \le \rmax \sum_{m \in M^*} \cost(m, \emptyset) \mbox { .}
\end{align*}
Because $\cost$ is admissible with respect to $M$, there exists an $M^*$
	with $\numanswered{S, R} \le \numanswered{M^*, R}$ and
	$\sum_{m \in M^*} \cost(m, \emptyset) \le \ctotal(S)$.  Combining
	this with the above inequality proves the lemma.
\end{proof}

The proof of Theorem~\ref{thm:greedy} (specifically the proof of claim 2) is
along the same lines as the
analysis of a related greedy algorithm for generating a \emph{task-switching schedule} \cite{streeter2007combining}, which in turn built on an elegant geometric proof technique developed by Feige et al.\ \yrcite{feige2004approximating}.

\newenvironment{claim1}[1]{\par\noindent\underline{Claim 1:}\space#1}{}
\newenvironment{claim1proof}[1]{\par\noindent\underline{Proof of claim 1:}\space#1}{}
\newenvironment{claim2}[1]{\par\noindent\underline{Claim 2:}\space#1}{}
\newenvironment{claim2proof}[1]{\par\noindent\underline{Proof of claim 2:}\space#1}{}

\newcommand{\gci}[0]{\ensuremath{C_i}}
\newcommand{\greedy}[0]{\mathsf{GREEDY}}
\newcommand{\nri}[0]{n_i}
\newcommand{\nrip}[0]{n_{i+1}}
\newcommand{\oc}[1]{t_{#1}}
\newcommand{\opt}[0]{\mathsf{OPT}}
\newcommand{\sgreedy}[0]{\ensuremath{S_{\textrm{G}}}}
\newcommand{\sopt}[0]{\ensuremath{S^*}}

\newcommand{\tautotal}[0]{\ensuremath{\mathrm{T}}}

\begin{theorem} \label {thm:greedy}
Let $\sgreedy = \mathrm{GreedyCascade}(R; \ac, \cost, \amg)$.
	Let $M$ be a set of models such that $M \subseteq \mgood_i$ for all $i$,
	and $\cost$ is admissible with respect to $M$,
where $\mgood_i$ is defined as in Algorithm~\ref{alg:greedy}.
	Define $\tautotal(S) \equiv \sum_{(x, y) \in R} \tau(x, S)$, where $\tau$ is defined in Equation~\eqref{eq:tau}.
Then,
\[
	\greedy \le 4 \cdot \opt
\]
where $\greedy = \tautotal(\sgreedy)$, $\opt = \min_{S \in M^{\infty}} \set {\tautotal(S)}$.
\end {theorem}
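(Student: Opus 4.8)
The plan is to pass to a ``price'' reformulation of the greedy cost and then split the bound into two claims mirroring the text: a purely combinatorial inequality (Claim~1) and an application of Lemma~\ref{lem:rate} in the scaling style of Feige et al.\ (Claim~2, which I expect to be the crux). Write $n_i = \card{R_i}$, $c_i = \cost(m_i, m_{1:i-1})$, and $a_i = \numanswered{m_i, R_i}$, so that $n_{i+1} = n_i - a_i$ because $R_{i+1} = R_i \setminus \answered(m_i, R_i)$. Exchanging the order of summation in $\tautotal(\sgreedy) = \sum_{(x,y)\in R} \tau(x, \sgreedy)$ gives $\greedy = \sum_i c_i n_i$, since an example is charged $c_i$ exactly while it is still unanswered at stage $i$. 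Define the greedy \emph{price} of stage $i$ to be $p_i = n_i c_i / a_i$; then $c_i n_i = a_i p_i$, so $\greedy = \sum_i a_i p_i = \int_0^\infty \big(\sum_{i : p_i > u} a_i\big)\, du$. I would compare this against the ``profile area''
\[
  \Phi = \int_0^\infty \max\set{ n_i : p_i > u }\, du ,
\]
aiming to prove $\greedy \le \Phi$ (Claim~1) and $\Phi \le 4\,\opt$ (Claim~2).

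For Claim~1 it suffices to show the pointwise inequality $\sum_{i : p_i > u} a_i \le \max\set{ n_i : p_i > u }$ for every threshold $u$ and then integrate. Fixing $u$ and letting $J = \set{i : p_i > u} = \set{i_1 < i_2 < \cdots < i_r}$, I would use $a_{i_j} = n_{i_j} - n_{i_j + 1} \le n_{i_j} - n_{i_{j+1}}$ (the last inequality because $i_{j+1} \ge i_j + 1$ and the $n_i$ are non-increasing); summing telescopes to $\sum_{j} a_{i_j} \le n_{i_1} = \max\set{n_i : i \in J}$. This is the histogram-area comparison and is elementary.

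Claim~2 is where Lemma~\ref{lem:rate} and the scaling argument enter. The first ingredient is rate dominance: since $M \subseteq \mgood_i$ and reusing intermediate computations never increases cost, for every $m \in M$ the greedy ratio satisfies $\rho_i := a_i/c_i \ge \numanswered{m, R_i}/\cost(m, m_{1:i-1}) \ge \numanswered{m, R_i}/\cost(m, \emptyset)$, so $\rho_i$ is at least the best single-model rate on $R_i$ used by Lemma~\ref{lem:rate}. Now fix an optimal cascade $\sopt \in M^{\infty}$ and let $\mu(t)$ be the number of examples it leaves unanswered after spending cost $t$, so $\opt = \int_0^\infty \mu(t)\, dt$. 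Applying Lemma~\ref{lem:rate} to the prefix of $\sopt$ of total cost at most $t$ with ground set $R_i$ shows that $\sopt$ answers at most $\rho_i t$ examples of $R_i$ by cost $t$; hence for $t < p_i/2 = n_i/(2\rho_i)$ it leaves more than $n_i/2$ of $R_i$ unanswered, giving $\mu(t) \ge n_i/2$ for all such $t$ and every $i$. Taking the maximum over $i$, $\mu(t) \ge \tfrac12 \max\set{n_i : p_i > 2t}$, and integrating with the substitution $u = 2t$ yields $\opt \ge \tfrac14 \Phi$.

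Combining the two claims gives $\greedy \le \Phi \le 4\,\opt$. The delicate points — and the reason Claim~2 is the main obstacle — are (i) verifying that admissibility lets Lemma~\ref{lem:rate} be applied to \emph{prefixes} of the optimal cascade on each greedy residual set $R_i$, and (ii) correctly accounting for the two independent factors of two (one from ``more than half the examples of $R_i$'' and one from the $u = 2t$ rescaling of price against cost) so that the constant comes out to exactly $4$, matching the $4 - \epsilon$ hardness asserted in \S\ref{sec:theoretical}.
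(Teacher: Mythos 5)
Your proposal is correct and takes essentially the same approach as the paper: your rate-dominance step ($\rho_i \ge \numanswered{m,R_i}/\cost(m,\emptyset)$ via $M \subseteq \mgood_i$ and $\cost(m,S)\le\cost(m,\emptyset)$) together with the bound $\mu(t) \ge n_i/2$ for $t < p_i/2$, obtained by applying Lemma~\ref{lem:rate} to the maximal prefix of $\sopt$ on the residual set $R_i$, is exactly the paper's Claim 1, and your comparison of $\greedy = \sum_i a_i p_i$ against the scaled profile $\Phi$ is the paper's Claim 2 (the Feige-style area argument) recast in layer-cake/integral form instead of the paper's sorted costs $\oc{j}$ and blocks $J(i)$. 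The only substantive difference is presentational, and mildly in your favor: the continuous formulation makes the two factors of $2$ explicit and sidesteps the even/odd duplication trick the paper needs at the end of its Claim 2.
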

\begin{proof}

We first introduce some notation.  Let $\sgreedy = m_{1:k}$,
let $\nri = \card{R_i}$ denote the number of examples remaining at the start
of the $i$th iteration of the greedy algorithm, and let $\gci = \cost(m_i, m_{1:i-1})$ be the cost of the abstaining model selected in the $i$th iteration.  Let $r^*_i = \frac {\nri - \nrip} {\gci}$ be the maximum benefit/cost ratio on iteration $i$.
Let $\sopt$ be an optimal cascade.

\begin{claim1}
For any $i$, there are at least $\frac {\nri} {2}$ examples with $\tau(x, \sopt) \ge \frac {\nri} {2 r^*_i}$.
\end {claim1}
\begin{claim1proof}
	Let $t = \frac {\nri} {2 r^*_i}$, and let $S_0$ be the maximal prefix of $\sopt$ satisfying $\ctotal(S_0) \le t$.
	Any example $x \in R_i \setminus \answered(S_0, R_i)$ must have $\tau(x, \sopt) \ge t$.  Thus, it suffices to show $\numanswered{S_0, R_i} \le \frac {\nri} {2}$.
	By Lemma~\ref{lem:rate}, $\numanswered{S_0, R_i} \le r^* t$,
	where $r^* = \max_{m \in M} \set{ \frac {\numanswered{m, R_i}} {\cost(m, \emptyset)}  }$.  By assumption, $c(m, \emptyset) \ge c(m, m_{1:i-1})$ for all $m \in M$,
	which implies $\rmax \le \max_{m \in M} \set {\frac {\numanswered{m, R_i}} {\cost(m, m_{1:i-1})}} \le r^*_i$,
	where the last inequality uses the fact that $M \subseteq \mgood_i$.
Thus, $\numanswered{S_0, R_i} \le t r^* \le t r^*_i = \frac {\nri} {2}$.
\end{claim1proof}

\begin{claim2}
$\greedy \le 4 \cdot \opt$.
\end{claim2}
\begin{claim2proof}
The total cost of the cascade $\sgreedy$ is the
sum of the costs associated with each stage, that is
\[
	\greedy = \sum_{x \in R} \tau(x, \sgreedy)
	= \sum_{i=1}^k \nri \gci \mbox { .}
\]

To relate $\opt$ to this expression, let $\set{\oc{j}}_{j=1}^n$ be the
	sequence that results from sorting the costs $\set{\tau(x, \sopt): x \in R}$
	in descending order.
Assuming for the moment that $\nri$ is even for all $i$, let
$J(i) = \set{j: \frac {n_{i+1}} {2} < j \le \frac {n_i} {2}}$.
Because $\seq{\oc{j}}$ is non-increasing, and $n_1 = n$ while $n_k = 0$,
\[
	\opt = \sum_{j=1}^n \oc{j}
	\ge \sum_{j=1}^{\frac n 2} \oc{j}
	= \sum_{i=1}^k \sum_{j \in J(i)} \oc{j}
	\mbox { .}
\]
Thus, to show $\opt \ge \frac 1 4 \greedy$, it suffices to show that for
any $i$,
\begin{equation} \label {eq:wts}
	\sum_{j \in J(i)} \oc{j} \ge \frac 1 4 n_i \gci \mbox { .}
\end{equation}
To see this, first note that for any $i$,
\[
	\sum_{j \in J(i)} \oc{j} \ge \card{J(i)} \cdot
\oc{\left(\frac {\nri} {2}\right)}
	= \frac {r^*_i \cdot \gci} {2} \cdot \oc{\left(\frac {\nri} {2}\right)}
\]
By claim 1, $\oc{\left(\frac {\nri} {2}\right)} \ge \frac {\nri} {2 r^*_i}$.  Combining
this with the above inequality proves \eqref{eq:wts}.

Finally, if $\nri$ is odd for some $i$, we can apply the above argument
to a set $R'$ which contains two copies of each example in $R$,
in order to prove the equivalent inequality
$ 2 \greedy \le 4 \cdot 2 \opt$.
\end{claim2proof}
\end{proof}

Finally, we consider the computational complexity of the optimization problem
Algorithm~\ref{alg:greedy} solves.  Given a validation set $R$, set of
abstaining models $M$, and accuracy constraint
$\ac$, we refer to the problem of finding a minimum-cost cascade $S$
that satisfies the accuracy constraint
as {\sc Minimum Cost Cascade}.
This problem is NP-hard to approximate even
in very special cases, as summarized in Theorem~\ref{thm:hard}.

\renewcommand{\proofname}{Proof (sketch)}
\begin {theorem} \label{thm:hard}
For any $\epsilon > 0$, it is NP-hard to obtain an approximation ratio
	of $4-\epsilon$ for {\sc Minimum Cost Cascade}.  This is true even in
	the special case where:
	(1) the cost function always returns 1, and
	(2) the accuracy constraint is always satisfied.
\end {theorem}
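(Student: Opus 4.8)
The plan is to reduce from the {\sc Min-Sum Set Cover} (MSSC) problem, for which a $(4-\epsilon)$-approximation is known to be NP-hard for every $\epsilon > 0$ \cite{feige2004approximating}. Recall that an instance of MSSC consists of a ground set $U$ together with a collection $\mathcal{S}$ of subsets of $U$ whose union is $U$; a feasible solution is an ordering of $\mathcal{S}$, and its cost is $\sum_{e \in U} \mathrm{cov}(e)$, where $\mathrm{cov}(e)$ is the index of the first set in the ordering that contains $e$. Since the inapproximability already holds in this bare combinatorial setting, it suffices to give an approximation-preserving reduction into the special case of {\sc Minimum Cost Cascade} described in the theorem statement.

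First I would describe the reduction. Given an MSSC instance $(U, \mathcal{S})$, create one labeled example for each $e \in U$ (with an arbitrary label), so that the validation set $R$ is in bijection with $U$. For each set $T \in \mathcal{S}$, create an abstaining model $m_T$ that returns a fixed label on every example corresponding to an element of $T$ and returns $\bot$ on every other example, so that $\answered(m_T, R)$ is exactly the image of $T$. Take $M = \set{m_T : T \in \mathcal{S}}$, let the cost function be the constant $1$, and let the accuracy constraint always hold. This construction is clearly polynomial, and it respects both special-case restrictions (1) and (2) of the theorem.

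Next I would verify that the objective is preserved exactly. Because $\cost \equiv 1$, Equation~\eqref{eq:tau} gives $\tau(x, S) = \card{\set{i : m_j(x) = \bot \ \forall j < i}}$, which is precisely the position of the first model in the cascade $S$ that does not abstain on $x$ — i.e.\ the quantity $\mathrm{cov}$ of the corresponding element under the induced ordering. Summing over examples, the cascade cost $\tautotal(S) = \sum_{(x,y)\in R} \tau(x,S)$ equals the MSSC cost of that ordering, and conversely every ordering of $\mathcal{S}$ corresponds to a cascade of equal cost. Since the accuracy constraint is trivially satisfied, every ordering is feasible, so the two optima coincide and any $(4-\epsilon)$-approximate cascade yields a $(4-\epsilon)$-approximate MSSC ordering.

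The only point requiring care — and the main (mild) obstacle — is the mismatch between MSSC solutions, which are permutations of $\mathcal{S}$, and cascades, which are sequences in $M^{\infty}$ that may repeat or omit models. Repeating a model incurs cost while covering no new example, so an optimal cascade uses each model at most once; and because $\bigcup_{T} T = U$, an optimal cascade can always be extended to answer every example without changing its relationship to an optimal ordering. Hence restricting attention to permutations does not change the optimum on either side, and the objective identity above applies to the optimal solutions. With this observation the transfer of the $4-\epsilon$ inapproximability from MSSC to {\sc Minimum Cost Cascade} is immediate, completing the reduction.
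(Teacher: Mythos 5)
Your proposal is correct and is essentially the paper's own reduction from {\sc Min-Sum Set Cover}: elements become examples, each set $T$ becomes a model $m_T$ abstaining exactly off $T$, with unit cost and a trivially satisfied accuracy constraint. The paper gives this only as a sketch, and your added verifications (that $\tau$ under unit cost equals the coverage index, and that repetitions or omissions of models cannot help, so optima over cascades and over permutations coincide) are exactly the right details to fill in.
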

\begin {proof}
	The theorem can be proved using a reduction from {\sc Min-Sum Set Cover} \cite {feige2004approximating}.  In the reduction, each element $e$ in the {\sc Min-Sum Set Cover} instance becomes an example $x_e$ in the validation set, and each set $Z$
	becomes a prediction model $m_Z$ where $m_Z(x_e) = \bot$ iff.\ $e \notin Z$.  The cost function in the {\sc Minimum Cost Cascade} instance always returns 1, and the accuracy constraint is always
	satisfied.
\end {proof}
\renewcommand{\proofname}{Proof}

\subsection {Adaptive policies} \label {sec:adaptive_policies}

In this section we discuss how to modify Algorithm~\ref{alg:greedy} to
return an adaptive policy tree rather than a linear cascade.

The greedy algorithm for set covering can be modified to produce an
adaptive policy \cite{golovin2011adaptive}, and a similar approach can be
applied Algorithm~\ref{alg:greedy}.  The resulting algorithm is similar
to Algorithm~\ref{alg:greedy}, but instead of building up a list of
abstaining models it builds a tree, where each node of the tree is labeled
with an abstaining model and each edge is labeled with some feature of the
parent node's output (e.g., a discretized confidence score).

If the $\answered$ function satisfies a technical condition
called \emph{adaptive monotone submodularity}, the resulting algorithm
has an approximation guarantee analogous to the one stated
in Theorem~\ref{thm:greedy}, but with respect to the best adaptive policy
rather than merely the best linear sequence.  This can be shown by combining
the proof technique of Golovin and Krause \yrcite{golovin2011adaptive}
with the proof of Theorem~\ref{thm:greedy}.
Unfortunately, the $\answered$ function is not guaranteed to have
this property in general.
However, it can be shown that
the adaptive version of Algorithm~\ref{alg:greedy} still has the guarantees
described in Theorem~\ref{thm:greedy} (i.e., adaptivity does not hurt).

\subsection {Greedy architecture search} \label {sec:architecture_search}

The GreedyCascade algorithm can be integrated with
model architecture search in multiple ways.  One way would be
to simply take all the models evaluated by an architecture search as input
to the greedy algorithm.  A potentially much more
powerful approach is to use architecture search as the model generator
$g$ used in the greedy algorithm's inner loop.

With this approach, there is one architecture search for each stage of
the generated cascade.  The goal of the $i$th search is to maximize the
benefit/cost ratio criterion used on the $i$th iteration of the greedy
algorithm (subject to the accuracy constraint).
Because the $i$th search only needs to consider examples not already
classified by the first $i-1$ stages, later searches have potentially lower
training cost.
Furthermore, the $i$th
model can make use of the intermediate layers of the first $i-1$ models
as input features, allowing computations to be reused across
stages of the cascade.

\section {Experiments}

\begin{table*}[t]
  \label{tab:cascade}
	\caption{Cascade of pre-trained MobileNet \cite{howard2017mobilenets} models.
	}
  \centering
	\begin{small}
	\begin{sc}
	\begin{tabular}{lllp{2cm}p{2cm}p{3cm}}
    \toprule
		Stage & Image size & \# Mults & {\centering Confidence threshold \newline (logit gap)} & \%(examples classified) & Accuracy \newline (on examples classified by stage) \\
    \midrule
		1 & 128 x 128 & 49M & 1.98 & 40\% & 88\% \\  %
		2 & 160 x 160 & 77M & 1.67 & 16\% & 73\% \\  %
		3 & 160 x 160 & 162M & 1.23 & 18\% & 62\% \\  %
		4 & 224 x 224 & 150M & 1.24 & 7\% & 45\% \\  %
		5 & 224 x 224 & 569M & $-\infty$ & 19\% & 45\% \\  %
    \bottomrule
  \end{tabular}
  \end{sc}
  \end{small}
\end{table*}

In this section we evaluate our cascade generation algorithm by applying it to
state-of-the-art pre-trained models for the
ImageNet classification task.
We first examine the efficacy of the abstention rules described
in \S\ref{sec:abstaining_models}, then we evaluate the full cascade-generation
algorithm.

\subsection {Accuracy versus abstention rate}

\newcommand{\avg}[0]{\ensuremath{\mathrm{avg}}}
\newcommand{\xp}{\ensuremath{\tilde x}}
\newcommand{\yp}{\ensuremath{\tilde y}}

As discussed in \S\ref{sec:abstaining_models}, we decide whether a model should
abstain from making a prediction by training a second model to predict its
accuracy on a given example, and checking whether predicted accuracy falls
below some threshold.

For our ImageNet experiments, we take top-1 accuracy as the accuracy metric, and
predict its value based on a vector of features derived from
the model's predicted class probabilities.
We use as features
(1) the entropy of the vector,
(2) the maximum predicted class probability, and
(3) the gap between the first and second highest predictions in logit space.
Our accuracy
model $\accmodel$
is fit using logistic regression on a validation set of 25,000 images.

\begin{figure} [h]
\begin{center}
  \includegraphics[width=3in]{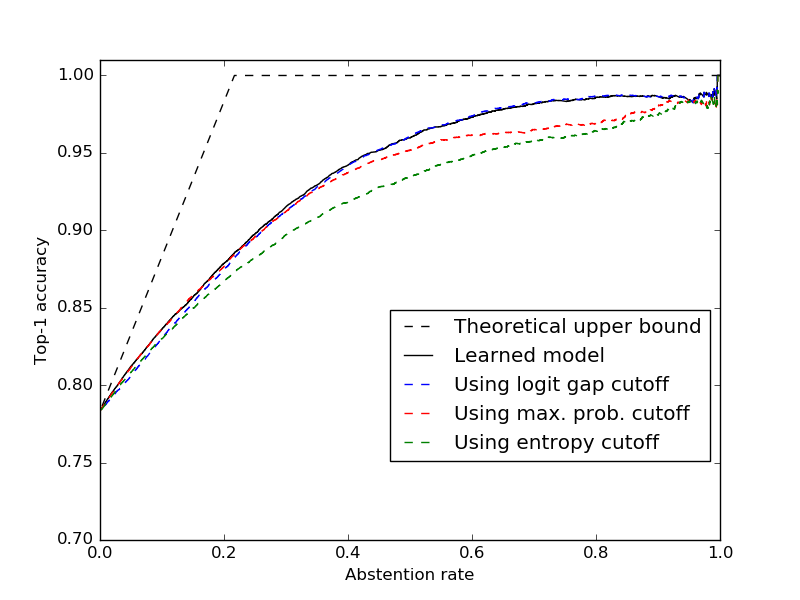}
	\caption{Accuracy vs. abstention rate for Inception v3.}
  \label{fig:abstaining_models}
\end{center}
\end{figure}

Figure~\ref{fig:abstaining_models} illustrates the tradeoffs between accuracy and
response rate that can be achieved by applying this rule to
Inception-v3, measured on a second disjoint validation set of 25,000 images.
The horizontal
axis is the fraction of examples on which Inception-v3 returns $\bot$,
and the vertical axis is top-1 accuracy on the remaining examples.
For comparison, we also show for each feature the tradeoff
curve obtained by simply thresholding the raw feature value.
We also show the theoretically
optimal tradeoff curve that would be achieved
using an accuracy model that predicts top-1 accuracy perfectly
(in which case
we only return $\bot$ on examples Inception-v3 misclassifies).

Overall, Inception-v3 achieves 77\% top-1 accuracy.  However, if we set the
logit gap cutoff threshold appropriately, we can achieve over 95\% accuracy
on the 44\% of examples on which the model is most confident (while returning
$\bot$ on the remaining 56\%).  Perhaps surprisingly, using the learned
accuracy model gives a tradeoff curve almost identical to that obtained
by simply thresholding the logit gap.

\subsection {Cascades} \label {sec:cascades}

Having shown the effectiveness of our abstention rules, we now
evaluate our cascade-generation algorithm on a pool of state-of-the-art
ImageNet classification models.  Our pool consists of
23 models released as part of the TF-Slim library \cite{silberman2016tfslim}.
The pool contains
two recent NasNet models produced by neural architecture search
\cite{zoph2017learning},
five models based on the Inception architecture \cite {szegedy2016rethinking},
and all 16 MobileNet models \cite{howard2017mobilenets}.
We generate abstaining models by thresholding the logit gap value.

For each model $p$, and each
$\alpha \in \set{1 - \frac {i} {100}: 0 \le i \le 5}$,
we
used Algorithm~\ref{alg:greedy} to generate a cascade
with low cost, subject to the constraint that accuracy was at least $\alpha$
times that of $p$.  We use number of multiplications as the cost.

When using Algorithm~\ref{alg:greedy} it is important
to use examples not seen during training, because statistics such as logit
gap are distributed very differently for them.
We used 25,000 images from the
ILSVRC 2012 validation set \cite{russakovsky2015imagenet} to
run the algorithm, and report results on the
remaining 25,000 validation images.\footnote{The cascade returned by the greedy algorithm always returns a non-$\bot$ prediction on unseen test images, because the final stage of the cascade always uses a confidence threshold of $-\infty$.}

\begin{figure}[tbh]
\begin{center}
  \includegraphics[width=3in]{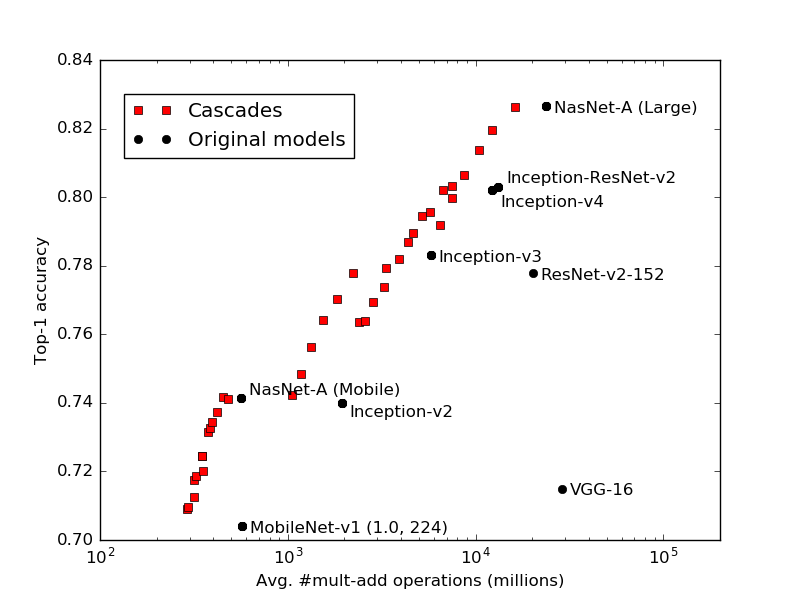}
  \caption{Cascades of pre-trained ImageNet models.}
  \label{fig:num_macs}
\end{center}
\end{figure}

Figure~\ref{fig:num_macs} shows the tradeoffs we achieve between accuracy
and average cost.
Relative to the large (server-sized) NasNet model, we obtain a 1.5x reduction
with no loss in accuracy.  Relative to
Inception-v4, one of the cascades obtains a 1.8x cost reduction with a
no loss in accuracy, while another obtains a 1.2x cost reduction with
a 1.2\% \emph{increase} in accuracy.  Relative to the largest MobileNet model,
we achieve a 2x cost reduction with a 0.5\% accuracy gain.

We now examine the structure of an auto-generated cascade.
Table 1 shows a cascade generated using a pool of 16 MobileNet models,
with the most accurate MobileNet model as the reference model.
The $i$th row of the table describes the (model, rule)
pair used in the $i$th stage of the cascade.
The cascade has several intuitive properties:

\begin{enumerate}
	\item \emph{Earlier stages use cheaper models.}  Model used in earlier stages
		of the cascade have fewer parameters, use fewer
		multiplies, and have lower input image
		resolution.
	\item \emph{Cheaper models require higher confidence.}  The minimum
		logit gap required to make a prediction is higher for earlier
		stages, reflecting the fact that cheaper models
		must be more confident in order to achieve
		sufficiently high accuracy.
	\item \emph{Cheaper models handle easier images.}
Although overall model accuracy increases in later stages,
		accuracy on the subset of images actually classified by each stage is strictly \emph{decreasing} (last column).
		This supports the idea that easier images are allocated to
		cheaper models.
\end{enumerate}

\subsection {Cascades of approximations} \label {sec:cascades-of-approximations}

A large number of techniques have been developed for reducing the cost
of deep neural networks via postprocessing.  Such techniques include
quantization, pruning of weights or channels, and tensor factorizations (see
\cite{han2015deep} and references therein for further discussion).
In this section, we show how these techniques can be used to
generate a larger pool of approximated models, which can then be used
as input to our cascade-generation algorithm in order to achieve further
cost reductions.  This also provides a way to make use of the
cascade-generation algorithm in the case where only a single pre-trained model
is available.

For these experiments, we focus on quantization of model parameters as
the compression technique.  For each model $m$, and each number of bits
$b \in \{1, 2, \ldots, 16\}$, we generate a new model $m_b$ by quantizing
all of $m$'s parameters to $b$ bits.  This yields a pool of $23 \cdot 16 = 368$
quantized models, which we use as input to the cascade-generation algorithm.
Cost is the number of bits read from memory when
classifying an example.  Aside from these two changes, our experiments are
identical to those in \S\ref{sec:cascades}.

\begin{figure}[h]
\begin{center}
  \includegraphics[width=3in]{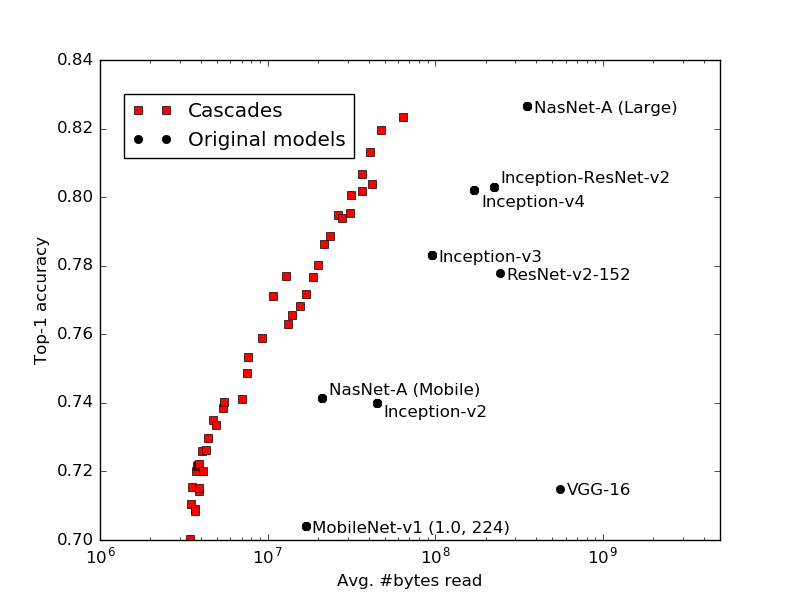}
	\caption{Cascades of quantized ImageNet models.}
  \label{fig:approx_num_bits}
\end{center}
\end{figure}

Figure~\ref{fig:approx_num_bits} shows accuracy as a function of
the average number of bits that must be fetched from memory in order to
classify an example.
Though the cascades generated in \S\ref{sec:cascades} (which were optimized for
number of multiplications) do not consistently improve on average memory I/O,
the cascades of approximations reduce it by up to a factor of 6 with
no loss in accuracy.
Perhaps
surprisingly, the cascades of approximations also
offer improvements in average-case number of multiplications similar to those
shown in Figure~\ref{fig:num_macs}, despite this not
being an explicit component of the cost function.

\section{Related work}

The high-level goal of our work is to reduce average-case inference cost
by spending less computation time on easier examples.  This subject is the
topic of a vast literature, with many distinct problem formulations spread
across many domains.

Within the realm of computer vision,
cascaded models received much attention following the seminal
work of Viola and Jones \yrcite{viola2001rapid}, who used a cascade of
increasingly more expensive features to create a fast and accurate face
detector.
In problems such as speech recognition and machine translation, inference
typically involves a heuristic search through the large space of possible
outputs, and the cascade idea can be used to progressively filter the
set of outputs under consideration \cite {weiss2010structured}.

Recent work has sought to apply the cascade idea to deep
networks.  Most research involves training an adaptive model end-to-end
\cite{graves2016adaptive,guan2017energy,hu2017anytime,huang2017multiscale}.
Though end-to-end training is appealing, it is sensitive to the choice of
model architecture.  Current approaches for image classification are based on
ResNet architectures, and do not achieve results competitive with the latest
NasNet models on ImageNet.

Another way to produce adaptive deep networks is to apply postprocessing to a
pool of pre-trained models, as we have done.  To our knowledge, the only
previous work that has taken this route is that of Bolukbasi et al.
\yrcite{bolukbasi2017adaptive}, who also present results for ImageNet.
In contrast to the greedy approximation algorithm presented in this work,
their approach does not have good performance in the worst case, and also
requires the pre-trained input models to be arranged into a directed acyclic
graph a priori as opposed to learning this structure as part of the
optimization process.

Finally, as already mentioned, our greedy approximation algorithm builds on
previous greedy algorithms for min-sum set cover and related problems
\cite{feige2004approximating,munagala2005pipelined,streeter2007combining}.

\section {Conclusions}

We presented a greedy meta-algorithm that can generate a cascaded model
given a pool of pre-trained models as input, and proved that the algorithm
has near-optimal worst-case performance under suitable assumptions.
Experimentally, we showed that cascades generated using this algorithm
significantly improve upon state-of-the-art ImageNet models in terms of both
average-case number of multiplications and average-case memory I/O.

Our work leaves open several promising directions for future research.
On the theoretical side, it remains an open problem to come up with
more compelling theoretical guarantees for adaptive policies as opposed to
linear cascades.
Empirically, it would be very interesting to
incorporate architecture search
into the inner loop of the greedy algorithm, as discussed in \S\ref{sec:architecture_search}.

\bibliography{cascades}
\bibliographystyle{icml2018}

\end{document}